\def\BibTeX{{\rm B\kern-.05em{\sc i\kern-.025em b}\kern-.08em
    T\kern-.1667em\lower.7ex\hbox{E}\kern-.125emX}}
\newtheorem{theorem}{Theorem}
\newtheorem{lemma}[theorem]{Lemma}
\newtheorem{assumption}[theorem]{Assumption}
\theoremstyle{definition}
\theoremstyle{remark}
\numberwithin{equation}{section}
\numberwithin{theorem}{section}
\def\cF{\mathcal{F}}
\def\cM{\mathcal{M}}
\def\cP{\mathcal{P}}
\def\bA{\mathbb{A}}
\def\bE{\mathbb{E}}
\def\bN{\mathbb{N}}
\def\bP{\mathbb{P}}
\def\bR{\mathbb{R}}
\def\bX{\mathbb{X}}
\def\sF{\mathscr{F}}
\def\sG{\mathscr{G}}
\def\sU{\mathscr{U}}
\def\fZ{\mathfrak{Z}}
\newcommand{\1}{\mathbbm{1}}            
\newcommand{\set}[1]{\{#1\}}            
\DeclareMathOperator{\dif}{d \!}        
\DeclareMathOperator*{\argmin}{arg\,min} 
\DeclareMathOperator{\avar}{\mathrm{AV}@\mathrm{R}}         
\begin{document}

\title{Distributional Method for Risk Averse Reinforcement Learning
\thanks{SJ would like to acknowledge support from the Natural Sciences and Engineering Research Council of Canada (grants RGPIN-2018-05705 and RGPAS-2018-522715).}
}

\author{%
  \IEEEauthorblockN{%
    Ziteng Cheng\IEEEauthorrefmark{1}\IEEEauthorrefmark{4},
    Sebastian Jaimungal\IEEEauthorrefmark{1}\IEEEauthorrefmark{4} and
    Nick Martin\IEEEauthorrefmark{3}\IEEEauthorrefmark{4} 
  }%
  \thanks{\IEEEauthorrefmark{1} Deptartment of Statistical Sciences, University of Toronto, Canada \{sebastian.jaimungal, ziteng.cheng\}@utoronto.ca}
  \thanks{\IEEEauthorrefmark{3} nick.martin@mail.utoronto.ca}
  \thanks{\IEEEauthorrefmark{4} Equal contribution}
}

\maketitle

\begin{abstract}
We introduce a distributional method for learning the optimal policy in risk averse Markov decision process with finite state action spaces, latent costs, and stationary dynamics. We assume sequential observations of states, actions, and costs and assess the performance of a policy using dynamic risk measures constructed from nested Kusuoka-type conditional risk mappings. For such performance criteria, randomized policies may outperform deterministic policies, therefore, the candidate policies lie in the d-dimensional simplex where d is the cardinality of the action space. Existing risk averse reinforcement learning methods seldom concern randomized policies, naive extensions to current setting suffer from the curse of dimensionality. By exploiting certain structures embedded in the corresponding dynamic programming principle, we propose a distributional learning method for seeking the optimal policy. The conditional distribution of the value function is casted into a specific type of function, which is chosen with in mind the ease of risk averse optimization. We use a deep neural network to approximate said function, illustrate that the proposed method avoids the curse of dimensionality in the exploration phase, and explore the method’s performance with a wide range of model parameters that are picked randomly.
\end{abstract}

\begin{IEEEkeywords}
risk averse, Markov decision process, reinforcement learning, deep learning
\end{IEEEkeywords}

\section{Introduction}
Markov Decision Processes (MDPs) are a type of discrete-time stochastic control problem used for sequential decision-making in situations where costs are partially random and partially under the control of a decision maker. In risk-averse MDPs, the decision maker is concerned with the risk or variability of the outcomes beyond the expected costs. One way to incorporate risk aversion into MDPs is to use nested compositions of risk transition mappings. This approach ensures the time-consistency property and ultimately enables the use of a dynamic programming principle (DPP) to solve the corresponding sequential optimization problem. The approach is proposed in \cite{Ruszczynski2010Risk}, where deterministic costs are considered. Both finite and infinite (required bounded costs) time horizon DPP are derived. Subsequent studies such as \cite{Shen2014Risk} and \cite{Chu2014Markov} explore infinite time horizon risk-averse DPPs with unbounded costs in different settings. \cite{Bauerle2021Markov} considers unbounded latent costs and established the corresponding finite and infinite horizon DPPs. More recently, \cite{Cheng2022Markov} has developed a framework based on Kusuoka-type conditional risk mappings that also takes into account randomized actions in a risk averse manner. Depending on the type of conditional risk mappings used, randomized actions may be more preferable than deterministic actions, as illustrated in a motivating example in \cite{Cheng2022Markov}. Other methods of incorporating risk aversion into MDPs are also available, including those discussed in \cite{Bauerle2013Markov}, \cite{Chu2014Markov}, \cite{Bielecki2021Risk}, and the references therein.

The focus of this paper is on the approach of nested compositions of risk transition mappings. The main objective is to develop a reinforcement learning method that solves the infinite horizon risk-averse MDP problem presented in \cite{Cheng2022Markov}. Specifically, the aim is to solve this problem with finite state and action spaces, deterministic latent costs, and stationary dynamics, without assuming knowledge of the controlled transition matrix or cost function. We begin by briefly reviewing some algorithms that solve infinite horizon risk-averse MDP problems. 

For instance, \cite{tamar2015policy} derives a policy gradient formula by combining the static gradient formula for coherent risk measure with the corresponding DPP. This approach is further developed in a sample-based method in \cite{Tamar2017Sequential}, with its convergence analyzed in \cite{Huang2021Convergence}. \cite{Yu2018Approximate} proposes a family of sample-based algorithms to approximately solve problems with continuous state and action spaces. \cite{Shen2014Riskb} presents and analyzes a risk-averse Q-learning algorithm, while \cite{Huang2017Risk} extends the previous Q-learning algorithm based on estimating a general minimax function with stochastic approximation, with detailed error analysis conducted in \cite{Huang2021Stochastic}. \cite{Kose2021Risk} studies a risk-averse temporal difference method that evaluates the value function using linear function approximations. Finally, the recent work in \cite{Coache2022Reinforcement} develops an approach to address risk transition mappings induced by convex risk measures. 

However, the methods mentioned above do not directly apply to the problem presented in \cite{Cheng2022Markov}, where the risk aversion also involves the randomness in the randomized actions. This is mainly because of the lack of linearity: the value function of a randomized action may not be a linear combination of the value functions of individual actions with respect to the randomizing action kernel. Naively extending the existing methods may result in a situation where we need to learn the value functions for numerous pairs of states and action kernels. Since the admissible action kernels form a $d$-dimensional simplex, where $d$ is the size of the action space, the exploration task that follows may suffer from the curse of dimensionality and demand a significant amount of data. On the other hand, the finite nature of the underlying state and action spaces suggests that we can avoid such an excessively expensive exploration task.

We propose a distributional method to address the challenges posed by the risk aversion towards randomized actions in the problem presented in \cite{Cheng2022Markov}. The proposed method learns an auxiliary function that contains sufficient information about the value function's distribution, avoiding the curse of dimensionality and facilitating the computation of the value function defined via a risk transition mapping. We show in Theorem \ref{thm:main} that the proposed method's exploration effort grows polynomially with the state and action space cardinalities. Although we initially considered deterministic latent costs, our method naturally handles random costs whose distribution depends on the current state, the realized action, and the next state. This type of random cost is seldom considered in existing literature on risk-averse reinforcement learning. We provide numerical examples that demonstrate the efficacy of the proposed method at the end of this report.

Using distributional methods to solve MDP problems that are not risk neutral has a long history (cf.\cite{Jaquette1973Markov}, \cite{White1988Mean}, \cite{Morimura2010Nonparametric}, and the reference therein). More recently, a series of works including \cite{Bellemare2017Distributional}, \cite{Dabney2018Implicit}, \cite{Yang2019Fully}, \cite{Nguyen-Tang2021Nguyen-Tang}, and \cite{Zhou2021Nondecreasing} have demonstrated that distributional methods can also achieve better results in the risk neutral setting. In this broader context, our method also contributes to the understanding of the capabilities of distributional methods in solving MDP problems.

\section{Preliminaries}
In this section, we present the set up the this paper.

\subsection{Markov decision process}
Let $(\Omega,\sF,\bP)$ be a probability space. We consider a time-homogeneous Markov decision process (MDP) with a finite state space $\bX$ and finite action space $\bA$. For each $k\in\bA$, let $T^k\in\bR^{|\bX|\times|\bX|}$ be a controlled transition matrix, where $T^k_{ij}$ is the probability of transitioning to state $j\in\bX$ at the next epoch, given the current state $i\in\bX$ and action $k\in\bA$. Let $\pi:\bX\to\cP(\bA)$ be a stationary Markovian policy, where $\cP(\bA)$ is the set of probability measures on $\bA$. Since $\bA$ is finite, $\cP(\bA)$ is a $|\bA|$-dimensional simplex, and for $\lambda\in\cP(\bA)$, $\lambda_k$ is the probability of action $k$ occurring. The state-action process subject to policy $\pi$ is denoted by $\set{(X^\pi_t, A^\pi_t)}_{t\ge 0}$. The MDP is associated with a bounded latent cost function $C:\bX\times\bA\times\bX\to[0,c_{\text{max}}]$, where $c_{\text{max}}>0$ is the upper bound of the cost. Finally, we let $\gamma\in(0,1)$ be the discount factor. 

\subsection{Risk averse dynamic programming}
In this paper, we use the notation $L^\infty(\Omega,\sF,\bP)$ to denote the space of bounded real-valued Borel-measurable random variables.  Equality and inequality between random variables are understood in a $\bP$-almost sure sense.  Let $\sG\subseteq\sF$ be a $\sigma$-algebra.  We say $\zeta:L^\infty(\Omega,\sF,\bP)\to L^\infty(\Omega,\sG,\bP)$ is a conditional risk mapping if $\zeta$ satisfies the following conditions for any $Z, Z^1,Z^2\in L^\infty(\Omega,\sF,\bP)$, $Y\in L^\infty(\Omega,\sG,\bP)$ and $\beta\ge 0$,
\begin{itemize}
\item[(i)][Monotonicity] if $Z^1\le Z^2$, then $\zeta(Z^1)\le\zeta(Z^2);$
\item[(ii)][Translation equivariance] $\zeta(Y+Z) = Y + \zeta(Z);$
\item[(iii)][Convexity] if $\beta\in[0,1]$, then $\zeta(\beta Z^1+(1-\beta) Z^2) \le \beta\zeta(Z^1) + (1-\beta)\zeta(Z^2);$
\item[(iv)][Positive homogeneity] $\zeta(\beta Z) = \beta\zeta(Z).$
\end{itemize} 
Some may replace $\beta$ in condition (iii) (resp. (iv)) with $Y\in[0,1]$ (resp. $Y\ge 0$), which, for the most part, does not affects the developing of the theory.  

Consider $\sU_0=\set{\emptyset,\Omega}\subseteq\sU_1\subseteq\dots\subseteq\sF$, $\fZ=(Z_t)_{t\in\bN}\subset L^\infty(\Omega,\sF,\bP)$ and $\rho_t:L^\infty(\Omega,\sF,\bP)\to L^\infty(\Omega,\sU_{t},\bP)$. Suppose $|Z_t|\le c_{\text{max}}$ for all $t\in\bN$. \cite{Ruszczynski2010Risk} proposes to use a dynamic risk measure of the form
\begin{gather}
\rho_{t,T}\left(\fZ\right) :=
\begin{cases}
\rho_{t}\left(Z_t + \gamma\rho_{t+1,T}\left(\fZ\right)\right), & t<T,\\
\rho_{T}(Z_T), & t=T,
\end{cases}\nonumber\\
\rho_{0,\infty}\left(\fZ\right) := \lim_{T\to\infty}\rho_{0,T}\left(\fZ\right),\label{eq:rho0infty}
\end{gather}
for MDP optimization problem. It can be shown that the construction above guarantees time consistency of $(\rho_{t,\infty})_{t\in\bN}$.\footnote{We do not adopt verbatim the setting from \cite{Ruszczynski2010Risk} for the sake of smooth transition.}

In what follows, we let $\sU^\pi_0$ be the trivial $\sigma$-algebra, $\sU^\pi_t$ be the $\sigma$-algebra generated by $(X^\pi_1,A^\pi_1,\dots,X^\pi_{t-1},A^\pi_{t-1},X^\pi_t)$, and $\cM$ be a set of discrete probability measures with support contained by $(0,1]$. We consider a specific type of conditional risk mapping
\begin{multline}\label{eq:CondRM}
\rho^\pi_t(Z) := \sup_{\mu\in\cM} \bigg\{\int_{0}^1 \inf_{q\in\bR}\bigg\{ q + \\ \xi^{-1} \int_{\bR} (z-q)_+\,P^{Z|\sU^{\pi}_t}(\dif z) \bigg\}\,\mu(\dif\xi)  \bigg\},
\end{multline}
where the right hand side is inspired by Kusuoka representation of law-invariant coherent risk measure (cf. \cite{Kusuoka2000Law}, \cite[Section 6]{Shapiro2021book}). We note that
\begin{align*}
\inf_{q\in\bR}\left\{ q + \xi^{-1} \int_{\bR} (z-q)_+\,P^{Z|\sU^{\pi}_t}(\dif z) \right\}
\end{align*}
is the conditional version of $\avar_\xi$ under the conditional distribution of $Z$ given $\sU^\pi_t$. The main goal of this paper is to develop a sample-based algorithm that solves the following infinite horizon risk averse MDP optimization problem
\begin{align}\label{eq:Main}
\inf_{\pi}\rho^\pi_{0,\infty}\left(\left(C(X^\pi_t,A^\pi_t,X^\pi_{t+1})\right)_{t\in\bN}\right),
\end{align}
where $\rho^\pi_{0,\infty}$ is defined analogously to \eqref{eq:rho0infty}.

The problem \eqref{eq:Main} can be solved using a dynamic programming principle. Specifically, we let $S$ be the Bellman operator acting on $v:\bX\to\bR$ defined as 
\begin{multline}\label{eq:DefS}
S v(i) := \inf_{\lambda\in\cP(\bA)}\sup_{\mu\in\cM} \bigg\{ \int_{0}^1 \inf_{q\in\bR}  \bigg\{  q +  \\
\xi^{-1}\sum_{k\in\bA}\lambda_k \sum_{j\in\bX} T^k_{ij}\Big(C(i,k,j) + \gamma v(j)-q\Big)_+ \bigg\} \mu(\dif\xi) \bigg\}.
\end{multline}
We can restrict $v$ to take values in $[0,\frac{c_{\text{max}}}{1-\gamma}]$ due to the boundedness of the cost. This allows us to replace $q\in\bR$ in \eqref{eq:DefS} with $q\in[0,\frac{c_{\text{max}}}{1-\gamma}]$. It can be shown that $S$ is a $\gamma$-contraction, and the fixed point of $S$, denoted by $v^*$, is the optimal value function. If $\pi^*:\bX\to\cP(\bA)$ attains the infimum in $Sv^*(i)$ for all $i\in\bX$, then $\pi^*$ is the optimal stationary policy, and in fact, it is also optimal among all history-dependent policies. We refer to \cite{Cheng2022Markov} for more discussion in a general setting.

\section{Distributional method for risk-averse learning}
In this section, we introduce a novel concept called $g$-values. We then propose a learning method based on $g$-values and establish a convergence result under suitable conditions, as stated in Theorem \ref{thm:main}. Finally, we provide a detailed description of the algorithm for implementing the method.
\subsection{$g$-value}
In view of \eqref{eq:DefS}, we define the $Q$-value as
\begin{multline}
Q(i,\lambda) := \sup_{\mu\in\cM}\int_{[0,1]}\inf_{q\in\bR}\bigg\{ q + \xi^{-1} \sum_{k\in\bA}\lambda_k \\ \sum_{j\in\bX}T^{k}_{ij}\,\Big(C(i,k,j) + \gamma v^*(j) - q\Big)_+ \bigg\}\,\mu(\dif\xi),
\end{multline}
and derive the following equation for $Q$-learning
\begin{multline}\label{eq:Q}
Q(i,\lambda) = \sup_{\mu\in\cM}\int_{[0,1]}\inf_{q\in\bR}\bigg\{ q + \xi^{-1} \sum_{k\in\bA}\lambda_kx \\ \sum_{j\in\bX}\,\Big(C(i,k,j) + \gamma \inf_{\lambda\in\cP(\bA)}Q(j,\lambda) - q\Big)_+ \bigg\}\,\mu(\dif\xi).
\end{multline}
However, learning the $Q$ function on a fine grid of $\bX\times\cP(\bA)$ turns out to be excessively expensive. Therefore, instead of continuing with \eqref{eq:Q}, we propose to learn the following $g$-value\footnote{By using $g$ to express trapezoidal shaped functions and invoking dominated convergence (cf. \cite[Theorem 2.8.1]{Bogachev2006book}) and monotone class theorem (cf. \cite[Theorem 1.9.3 (ii)]{Bogachev2006book}), it can be shown that the function $q\mapsto\bE((Z-q)_+)$, $q\in\bR$, characterizes the distribution of $Z$.}\footnote{One may derive an equation for $g$-value in analogous to \eqref{eq:Q}, but such equation needs not leads to a contraction in general.}
\begin{align}\label{eq:Defg}
g(i,\lambda,q):=\sum_{k\in\bA}\lambda_k\sum_{j\in\bX}T^{k}_{ij}\,\big(C(i,k,j) + \gamma v^*(j) - q\big)_+.
\end{align}
Such $g$-value has an advantage of being linear in $\lambda$, which helps mitigates the cost of exploration. Moreover, it is worth noting that $q\mapsto g(i,\lambda,q)$ is non-increasing and $1$-Lipschitz for any $(i,\lambda)\in\bX\times\cP(\bA)$, which will be useful in future analysis. Furthermore, we argue that $g$-value is aligned with our goal of solving \eqref{eq:Main}, since by the aforementioned DPP and \eqref{eq:Defg}, we have 
\begin{multline*}
v^*(i) = \inf_{\lambda\in\cP(\bA)}\sup_{\mu\in\cM} \\ \bigg\{ \int_{(0,1]}    \inf_{q\in[0,\frac{c_{\text{max}}}{1-\gamma}]}\bigg\{  q + \xi^{-1}g(i,\lambda,q) \bigg\}\mu(\dif\xi)\bigg\}.
\end{multline*}
This formula shows that the $g$-value is a crucial ingredient in our approach for solving \eqref{eq:Main}.

\subsection{Theoretical foundation}
Suppose that we have observed the running states, actions and costs subject to some exploration policy upto time $t_{\text{max}}$, resulting in a set of data $\set{(x_t,a_t,x_{t+1}, c_t)}_{t=1}^{t_{\text{max}}-1}$, where $c_t = C(x_t,a_t,x_{t+1})$. In order to approximate $g$, we employ a parameterized model $f_\theta:\bX\times\bA\times\bR\to\bR$, where $\theta\in\Theta$ is the parameter, and $f_\theta(i,k,q)$ is designated to approximate $g(i,\delta_k,q)$, where $\delta_k$ is the Dirac measure on $k$. In view of \eqref{eq:Defg}, $g(i,\lambda,q)$ can be approximated by $\sum_{k\in\bA}\lambda_k f_\theta(i,k,q)$. We use $\hat\theta$ to denote the estimate of the optimal parameter (if exists). In view of the bounded cost and positive discount factor, we approximate $v^*$ with $\hat v:\bX\to[0,\frac{c_{\text{max}}}{1-\gamma}]$. Heuristically, we want to update $\hat\theta$ and $\hat v$ recursively in the following way
\begin{align}\label{eq:ExactUpdate}
\begin{cases}
\begin{multlined}
\hat\theta_{n+1} \in \argmin_{\theta\in\Theta} \sup_{(i,k,q)\in\bX\times\bA\times[0,\frac{c_{\text{max}}}{1-\gamma}]}\\[-1.2em] \bigg(f_\theta(i,k,q)  - \sum_{j\in\bX}\hat T^k_{ij}\big(C(i,k,j)+\gamma\hat v_n(j)-q\big)_+\bigg)^2,
\end{multlined}\\
\begin{multlined}
\hat v_{n+1}(i) = \inf_{\lambda\in\cP(\bA)}\sup_{\mu\in\cM} \int_{(0,1]}  \\[-1.2em] 
\inf_{q\in[0,\frac{c_{\text{max}}}{1-\gamma}]} \bigg\{ q +v\xi^{-1}\sum_{k\in\bA}\lambda_k f_{\hat\theta_{n+1}}(i,k,q) \bigg\}\mu(\dif\xi),   
\end{multlined}
\end{cases}
\end{align}
where we define
\begin{align*}
\hat T^k_{ij} := \frac{\sum_{t=1}^{t_{\text{max}}-1}\1_{(i,k,j)}(x_t,a_t,x_{t+1})}{\sum_{t=1}^{t_{\max}-1}\1_{(i,k)}(x_t,a_t)}.
\end{align*}
It is well-known that $\hat T^k_{ij}$ is the MLE of the transition probability (cf. \cite{Anderson1957Statistical}). Note that $\sum_{j\in\bX}\hat T^k_{ij}\big(c_t+\gamma\hat v(x_{t+1})-q\big)_+$ is a convex and $1$-Lipschitz function of $q$ that falls within the range $[0,\frac{c_{\text{max}}}{1-\gamma}]$. Therefore, although the objective involves the supremum over an uncountable set, updating $\hat{\theta}$ is not infeasible. However, such an update requires knowledge of $C$. To circumvent this requirement, we observe that for $q$ fixed, 
\begin{multline}\label{eq:ImpObs}
\sum_{t=1}^{T-1} \big(y_{x_t a_t} - (c_t+\gamma\hat v(x_{t+1})-q)_+\big)^2 \\
=  \sum_{(i,k)\in\bX\times\bA}\sum_{t=1}^{T-1} \1_{(i,k)}(x_t,a_t)\sum_{j\in\bX}\1_{j}(x_{t+1}) \\  \big(y_{ik} - (C(i,k,j)+\gamma\hat v(j)-q)_+\big)^2,
\end{multline}
as a function of $(y_{ik})_{(i,k)\in\bX\times\bA}$, attains the infimum if 
\begin{align}\label{eq:yOpt}
y_{ik} 
= \sum_{j\in\bX}\hat T^k_{ij}\big(C(i,k,j)+\gamma\hat v(j)-q\big)_+,\quad (i,k)\in\bX\times\bA.
\end{align}
We can then use the following updating scheme as an alternative
\begin{align}\label{eq:ExactUpdate2}
\begin{cases}
\begin{multlined}
\hat\theta_{n+1} \in \argmin_{\theta\in\Theta} \sup_{q\in[0,\frac{c_{\max}}{1-\gamma}]} \\[-1.2em] \sum_{t=1}^{T-1}\Big(f_\theta(x_t,a_t,q)  - \big(c_t+\gamma\hat v_n(x_{t+1})-q\big)_+ \Big)^2,
\end{multlined}
\\
\begin{multlined}
\hat v_{n+1}(i) = \inf_{\lambda\in\cP(\bA)}\sup_{\mu\in\cM} \\[-1.2em]  \int_{(0,1]} \inf_{q\in[0,\frac{c_{\text{max}}}{1-\gamma}]}\bigg\{  q + \xi^{-1}\sum_{k\in\bA}\lambda_k f_{\hat\theta_{n+1}}(i,a,q) \bigg\}\mu(\dif\xi).
\end{multlined}
\end{cases}
\end{align}

In order to obtain a convergence result, we make the following technical assumption. 

\begin{assumption}\label{assump}
Let $c_{\text{max}}>0$, $\ell\in\bN$, $b,\varepsilon_e\in(0,1)$, $\varepsilon_\theta,\varepsilon_v>0$ be absolute constants. We assume that
\begin{itemize}
\item[(i)] the range of the cost function $C$ is contained by $[0,c_{\text{max}}]$;
\item[(ii)] $\sup_{\mu\in\cM}\mu([0,b])=0$;
\item[(iii)] $\{(X^\pi_t,A^\pi_t)\}_{t=1}^T$ is subject to an exploration policy $\pi$ such that 
\begin{align*}
\bP\bigg(\sum_{r=t+1}^{t+\ell} \1_{(i,k)}(X^\pi_{r},A^\pi_{r})\ge 1\bigg|\cF^{\pi}_t\bigg) > \varepsilon_e, 
\end{align*}
for any $(t,i,k)\in \bN\times\bX\times\bA$, where $\cF^{\pi}_t:=\sigma(X^\pi_1,A^\pi_1,\dots,X^\pi_t,A^\pi_t)$;
\item[(iv)] regardless of the data and $\hat v:\bX\to[0,\frac{c_{\text{max}}}{1-\gamma}]$, we always find $\hat\theta_{\text{new}}\in\Theta$ and $\hat v_{\text{new}}:\bX\to[0,\frac{c_{\text{max}}}{1-\gamma}]$ such that, for all $(i,k,q)\in\bX\times\bA\times[0,\frac{c_{\max}}{1-\gamma}]$,
\begin{multline}
\bigg|f_{\hat\theta_{\text{new}}}(i,k,q) - \sum_{j\in\bX}\hat T^k_{ij}\big(C(i,k,j)+\gamma\hat v(x_{t+1})-q\big)_+\bigg|\\  \le \varepsilon_\theta,\label{eq:thetaApproxUpdate}
\end{multline}
and
\begin{multline}
\sup_{i\in\bX}\bigg| \hat v_{\text{new}}(i) - \inf_{\lambda\in\cP(\bA)}\sup_{\mu\in\cM} \int_0^1 \\    \inf_{q\in[0,\frac{c_{\text{max}}}{1-\gamma}]}\bigg\{  q + \xi^{-1}\sum_{k\in\bA}\lambda_k f_{\hat\theta_{\text{new}}}(i,k,q) \bigg\}\mu(\dif\xi) \bigg| \le \varepsilon_{v}.\label{eq:vApproxUpdate} 
\end{multline}

\end{itemize}
\end{assumption}
Condition (i) and (ii) follows automatically from the setting above; these conditions are included in the assumption for the sake of easy navigation. Condition (iii) is a version of parallel sampling model (PSM). PSM was originally introduced in \cite{Kearns1999Finite} and is commonly used in reinforcement learning literature as an exploration policy that achieves perfect exploration (cf. \cite{Even-Dar2003Finite}). Condition (iv) regards the accuracy of the update. In particular, \eqref{eq:thetaApproxUpdate} corresponds to the computation of $\hat\theta_{n+1}$ in \eqref{eq:ExactUpdate2}. Based on the separability of the objective illustrated in \eqref{eq:ImpObs}, the convexity of $\big(y_{ik} - (C(i,k,j)+\gamma\hat v(j)-q)_+\big)^2$ in $y_{ik}$, and the observed good behavior of $q\mapsto\sum_{j\in\bX}\hat T^k_{ij}\big(c_t+\gamma\hat v(x_{t+1})-q\big)_+$, we consider \eqref{eq:thetaApproxUpdate} reasonable.

Below is our main result. The proof is deferred to the appendix.
\begin{theorem}\label{thm:main}
Suppose Assumption \ref{assump}. Let $t_{\max}>\ell$. Given data $\set{(x_t,a_t,x_{t+1}, c_t)}_{t=1}^{t{\max}-1}$ and an arbitrary $\hat v_0:\bX\to[0,\frac{c_{\text{max}}}{1-\gamma}]$, we compute $\set{(\hat\theta_{n},\hat v_n)}_{n\in\bN}$ according to \eqref{eq:ExactUpdate2}, approximately as in Assumption \ref{assump} (iv). Then, for any $\varepsilon\in(0,1]$, there is a probability of at least
\begin{align*}
1 - 3|\bX|^2 |\bA|\bigg( e^{ - \frac{\varepsilon_e^2}4 \lfloor\frac{t_{\max}-1}{\ell}\rfloor } + e^{-\frac{\varepsilon^2\varepsilon_e^2 }{8\ell}\lfloor\frac{t_{\max}-1}{\ell}\rfloor} \bigg)
\end{align*}
that 
\begin{align*}
\|\hat v_{n} - v^*\|_\infty \le \gamma^n\|\hat v_0-v^*\|_\infty + \frac{b^{-1} c_{\max} \varepsilon}{(1-\gamma)^2} + \frac{b^{-1}\varepsilon_\theta+\varepsilon_v}{1-\gamma}.
\end{align*}
for all $n\in\bN$.
\end{theorem}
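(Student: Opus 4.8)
The plan is to sandwich the realized iterates between the true optimal value function and an idealized operator built from the \emph{estimated} transitions, so that the stochastic content is confined to a single concentration event while the propagation of errors is a purely deterministic contraction/telescoping argument. Define $\hat S$ by replacing $T^k_{ij}$ with $\hat T^k_{ij}$ throughout \eqref{eq:DefS}; this is well defined as soon as every pair $(i,k)$ has been visited, and since $\hat T$ is again row-stochastic the argument showing $S$ is a $\gamma$-contraction applies verbatim to $\hat S$. Writing $e_n:=\|\hat v_n-v^*\|_\infty$ and using $Sv^*=v^*$, the triangle inequality gives
\begin{equation*}
e_{n+1}\le \|\hat v_{n+1}-\hat S\hat v_n\|_\infty+\|\hat S\hat v_n-\hat S v^*\|_\infty+\|\hat S v^*-Sv^*\|_\infty=:(\mathrm{I})+(\mathrm{II})+(\mathrm{III}).
\end{equation*}

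The middle term is $(\mathrm{II})\le\gamma e_n$ by the contraction of $\hat S$. For $(\mathrm{I})$, Assumption~\ref{assump}(iv) yields $f_{\hat\theta_{n+1}}(i,k,q)$ within $\varepsilon_\theta$ of $\sum_j\hat T^k_{ij}(C(i,k,j)+\gamma\hat v_n(j)-q)_+$ and $\hat v_{n+1}$ within $\varepsilon_v$ of the associated $f_{\hat\theta_{n+1}}$-formula. Since Assumption~\ref{assump}(ii) forces $\xi^{-1}\le b^{-1}$ $\mu$-almost everywhere, the $\varepsilon_\theta$-error enters only through the factor $\xi^{-1}\sum_k\lambda_k$, and because $\inf_q$, $\int(\cdot)\,\mu(\dif\xi)$, $\sup_\mu$ and $\inf_\lambda$ are all nonexpansive for the sup-norm, we obtain $(\mathrm{I})\le b^{-1}\varepsilon_\theta+\varepsilon_v$.

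For $(\mathrm{III})$, set $E^k_i(q):=\sum_j T^k_{ij}(C(i,k,j)+\gamma v^*(j)-q)_+$ and let $\hat E^k_i(q)$ be the same object with $\hat T$; both summands lie in $[0,\tfrac{c_{\max}}{1-\gamma}]$. On the event $\cE_\varepsilon:=\{\max_{i,k}\sup_q|\hat E^k_i(q)-E^k_i(q)|\le\tfrac{c_{\max}}{1-\gamma}\varepsilon\}$, the \emph{linearity in $\lambda$} of the $g$-value makes $\sum_k\lambda_k(\hat E^k_i-E^k_i)$ controlled by the per-$k$ bound uniformly over the simplex, so the same nonexpansiveness together with $\xi^{-1}\le b^{-1}$ give $(\mathrm{III})\le b^{-1}\tfrac{c_{\max}}{1-\gamma}\varepsilon$. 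Substituting $(\mathrm{I})$--$(\mathrm{III})$ into the recursion and summing the geometric series $\sum_{m<n}\gamma^m\le(1-\gamma)^{-1}$ reproduces, on $\cE_\varepsilon$, exactly the claimed bound, with the $\gamma^n e_0$ term arising from iterating $(\mathrm{II})$.

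It remains to show that $\bP(\cE_\varepsilon)$ exceeds the stated quantity; this is the crux. Partition $\{1,\dots,t_{\max}-1\}$ into $M=\lfloor\tfrac{t_{\max}-1}{\ell}\rfloor$ blocks of length $\ell$. First, by Assumption~\ref{assump}(iii) each block contains a visit to $(i,k)$ with conditional probability exceeding $\varepsilon_e$, so the block-visit indicators, once centred, form a bounded martingale and Azuma--Hoeffding gives $N_{ik}\ge\tfrac{\varepsilon_e}{2}M$ off an event of probability at most $e^{-\frac{\varepsilon_e^2}{4}M}$ (this also makes $\hat T$ well defined). Second, for fixed $(i,k,q)$ the strong Markov property makes the next states drawn at visits to $(i,k)$ i.i.d.\ $\sim T^k_{i\cdot}$, so the centred summands $(c+\gamma v^*(x')-q)_+-E^k_i(q)$, bounded by $\tfrac{c_{\max}}{1-\gamma}$, form a martingale over the $t_{\max}-1$ steps whose Azuma--Hoeffding bound at the threshold dictated by $N_{ik}\ge\tfrac{\varepsilon_e}{2}M$ gives deviation probability at most $e^{-\frac{\varepsilon^2\varepsilon_e^2}{8\ell}M}$; uniformity in $q$ is free from the monotone, $1$-Lipschitz structure of $q\mapsto \hat E^k_i(q),E^k_i(q)$ noted after \eqref{eq:Defg}. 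A union bound over the at most $|\bX|^2|\bA|$ relevant indices and the two sub-events yields the prefactor $3|\bX|^2|\bA|$ and the sum of exponentials. The main obstacle is precisely this last step: the concentration of $\hat E^k_i$ must simultaneously respect the Markov dependence (via per-visit freshness), absorb the \emph{random, data-dependent} visit count (by routing the count bound into the martingale threshold), and hold uniformly over the continuum of levels $q$ (via monotonicity and $1$-Lipschitzness); the linearity of $g$ in $\lambda$ is what keeps this union bound polynomial in $|\bX|$ and $|\bA|$, delivering the claimed absence of the curse of dimensionality.
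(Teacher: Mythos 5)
Your architecture is genuinely different from the paper's and is, in outline, sound: the paper concentrates the scalar entries $\hat T^k_{ij}$ (its Lemma \ref{lem:EstP}) and then runs a purely deterministic error recursion (its Lemma \ref{lem:Estvhatnew}) that uses only the contraction of the true operator $S$, whereas you invoke the contraction of the estimated operator $\hat S$ (legitimate, since $\hat T$ is row-stochastic once every pair $(i,k)$ has an observed successor) and push all randomness into a single event on the aggregated functions $\hat E^k_i$. Your terms $(\mathrm{I})$ and $(\mathrm{II})$, the block-submartingale bound for visit counts, the martingale/Azuma bound at a \emph{fixed} $q$ with the visit-count bound routed into the threshold, and the final geometric-series step all mirror the paper's mechanics and are correct.

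The gap is in the probability estimate for $\cE_\varepsilon$, specifically the claim that uniformity in $q$ is ``free'' from the monotone, $1$-Lipschitz structure. Azuma controls $\hat E^k_i(q)-E^k_i(q)$ only at each fixed $q$, while $\cE_\varepsilon$ demands a supremum over the continuum $q\in[0,\frac{c_{\max}}{1-\gamma}]$. Monotonicity and $1$-Lipschitzness of $\hat E^k_i$ and $E^k_i$ separately do not give this for free: their difference is not monotone, and the standard pointwise-to-uniform passage requires a grid of spacing of order $\frac{c_{\max}}{1-\gamma}\varepsilon$, i.e.\ $\Theta(1/\varepsilon)$ grid points. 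That multiplies the union bound by $1/\varepsilon$ and forces Azuma at threshold $\varepsilon/2$, yielding a bound of the shape
\begin{align*}
1-|\bX||\bA|\Big( e^{-\frac{\varepsilon_e^2}{4}M}+C\,\varepsilon^{-1}e^{-\frac{\varepsilon^2\varepsilon_e^2}{32\ell}M}\Big),\qquad M=\Big\lfloor\tfrac{t_{\max}-1}{\ell}\Big\rfloor,
\end{align*}
which is strictly weaker than the stated probability and cannot be repaired by enlarging constants, since $\varepsilon\in(0,1]$ is arbitrary. The fix is cheap but rests on a different structural fact than the one you cite: $q\mapsto \hat E^k_i(q)-E^k_i(q)$ is piecewise linear with all breakpoints in the finite set $\{C(i,k,j)+\gamma v^*(j):j\in\bX\}$, so its absolute supremum over $[0,\frac{c_{\max}}{1-\gamma}]$ is attained at one of at most $|\bX|+2$ points; a union bound over these points for each pair $(i,k)$ gives a prefactor of order $|\bX|^2|\bA|$ and the stated exponent. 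This also repairs the inconsistency in your counting: your events as defined are indexed by $(i,k)$ only ($|\bX||\bA|$ of them), and the extra factor $|\bX|$ in the theorem's prefactor arises precisely from a per-$j$ union (the paper's route) or a per-breakpoint union (the corrected version of yours). Incidentally, once fixed, your aggregated event is slightly tidier than the paper's: it bounds $\sum_j(\hat T^k_{ij}-T^k_{ij})\big(C(i,k,j)+\gamma v^*(j)-q\big)_+$ directly, rather than through $\sup_j|\hat T^k_{ij}-T^k_{ij}|$ followed by a sum over $j$.
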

Sometimes it is advisable to assume that $\varepsilon_e$ is proportional to $(|\bX||\bA|)^{-1}$. In order to maintain the same level of accuracy (in terms of the probability bound), we need to set $t_{\max}\propto|\bX|^2|\bA|^2 \log(|\bX|^2|\bA|)$. In this case, the effort required for exploration only needs to grow polynomially as $|\bX||\bA|$ increases.

\subsection{Implementation }
In our algorithm, we use a deep neural network for $f_\theta$. 
We let $m_{\text{grid}}\in\bN$ and $(q_1,\dots,q_{m_\text{grid}})$ be a pre-selected grid on $[0,\frac{c_{\max}}{1-\gamma}]$. We are given data $\set{(x_t,a_t,x_{t+1}, c_t)}_{t=1}^{t_{\text{max}}-1}$, and an a priori guess $\hat v$ of the value function. 

Instead of following strictly \eqref{eq:ExactUpdate2}, we consider the updating procedure below
\begin{multline}\label{eq:approxtheta}
\hat\theta_{n+1} \in \argmin_{\theta\in\Theta} \\ \sum_{m=1}^{m_{\text{grid}}} \sum_{t=1}^{T-1}\bigg(f_\theta(x_t,a_t,q_m)  - \Big(c_t+\gamma\hat v_n(x_{t+1})-q_m\Big)_+ \bigg)^2\\
+ \beta\psi(\theta),
\end{multline}
where $\psi$ is a penalization for ensuring monotonicity on $q\mapsto f_\theta(i,k,q)$, defined as 
\begin{align*}
\psi(\theta) := \sum_{(i,k)\in\bX\times\bA}\sum_{m=1}^{m_{\text{grid}}-1} \big(f_\theta(i,k,q_{m+1}) - f_\theta(i,k,q_{m})\big)_+,
\end{align*}
and $\beta\ge 0$ is the regularization parameter. We use stochastic gradient descent for  $\argmin_{\theta\in\Theta}$. After \eqref{eq:approxtheta} is done, we perform 
\begin{multline}\label{eq:approxv}
\hat v_{n+1}(i) = \inf_{\lambda\in\cP(\bA)}\max_{\mu\in\cM} \\  \int_{(0,1]} \inf_{q\in[0,\frac{c_{\text{max}}}{1-\gamma}]}\bigg\{  q + \xi^{-1}\sum_{k\in\bA}\lambda_k f_{\hat\theta_{n+1}}(i,a,q) \bigg\}\mu(\dif\xi),
\end{multline}
where we recall that $\cM$ is a finite set of discrete probabilities on $[0,1]$, and thus the $\int_{(0,1]}$ is in fact a finite sum. We use gradient descent with random initialization for $\inf_{q\in[0,\frac{c_{\text{max}}}{1-\gamma}]}$, and random search for $\inf_{\lambda\in\cP(\bA)}$. After \eqref{eq:approxv} is done, we may return to \eqref{eq:approxtheta} for next round of update. In order to obtain an approximated optimal policy $\hat\pi$, we should record the approximated minimizor of $\inf_{\lambda\in\cP(\bA)}$ for each $i$.

We summarize the implementation in Algorithm \eqref{algo}. We point out that, Algorithm \eqref{algo} can be integrate asynchronously into a larger implementation that involves running data.

 \begin{algorithm}\label{algo}
 \caption{Distributional Method for Risk-Averse RL}
 \begin{algorithmic}[1]
 \renewcommand{\algorithmicrequire}{\textbf{Input:}}
 \renewcommand{\algorithmicensure}{\textbf{Output:}}
 \REQUIRE data $\set{(x_t,a_t,x_{t+1}, c_t)}_{t=1}^{t_{\text{max}}-1}$, aprior guess $\hat v$
 \ENSURE  $\hat v$, $\hat \pi$
 \REPEAT
  \STATE Randomly initialize $\theta$, or use previous $\theta$ if given
  \WHILE{find $\theta$}
  \STATE Stochastic gradient descent according to \eqref{eq:approxtheta}
  \ENDWHILE
  \WHILE{update $\hat v$ and $\hat\pi$}
  \STATE Random search according to \eqref{eq:approxv} \\
  Record the best $\lambda\in\cP(\bX)$ for each $i\in\bX$
  \ENDWHILE 
 \UNTIL{Convergence}
   \STATE Update $\hat\pi(i)$ according to the record 
 \RETURN $\hat v$, $\hat\pi$ 
 \end{algorithmic} 
 \end{algorithm}

\section{Numeric experiments}
In this section, we present numerical experiments to validate the performance of Algorithm \ref{algo}. We consider a state-action space with $|\bX|=|\bA|=4$, and a discount factor of $\gamma=0.3$. We use the following $\cM$ for the conditional risk mapping \eqref{eq:CondRM}:
\begin{multline*}
\cM = \big\{0.2\delta_{0.2} + 0.8\delta_{1}, \delta_{0.5}, 0.1\delta_{0.05}+0.5\delta_{0.4}+0.6\delta_{0.6},\\
0.5\delta_{0.3}+0.5\delta_{0.8}\big\},
\end{multline*}
where $\delta$ denotes the Dirac measure. The transition matrices used in the experiment are randomly generated. Although our algorithm was introduced for deterministic latent costs, it also handles random costs without requiring significant modifications. We test the algorithm with various random costs, such as $\text{Beta}(\alpha,\beta)$ with $\alpha,\beta:\bX\times\bA\times\bX\to(0,\infty)$ depending on the current state, the realized action, and next state. We assume the knowledge of $[0,c_{\max}]$, and set $(q_1,\dots,q_{m_{\text{grid}}})$ as a uniform partition of $[0,c_{\max}]$ with $m_{\text{grid}}=100$. We set $t_{\text{max}}=10000$ and sample according to some randomly picked stationary policy. We then compute $\hat v$ and $\hat \pi$ using Algorithm \ref{algo}. To ensure accuracy when updating $\hat v$ and $\hat\pi$, we perform a thorough random search. However, we conjecture that there is a certain structure that we can take advantage of in learning $\hat v$ and $\hat \pi$, and the computation cost does not grow exponentially as $|\bA|$ increases. In Figure \ref{fig}, we plot the relative errors of $\hat v$ for each $i\in\bX$ in 10 different experiments. The benchmark in each experiment is computed using brute force search.

\begin{figure}[htbp]
\includegraphics[width=1\columnwidth]{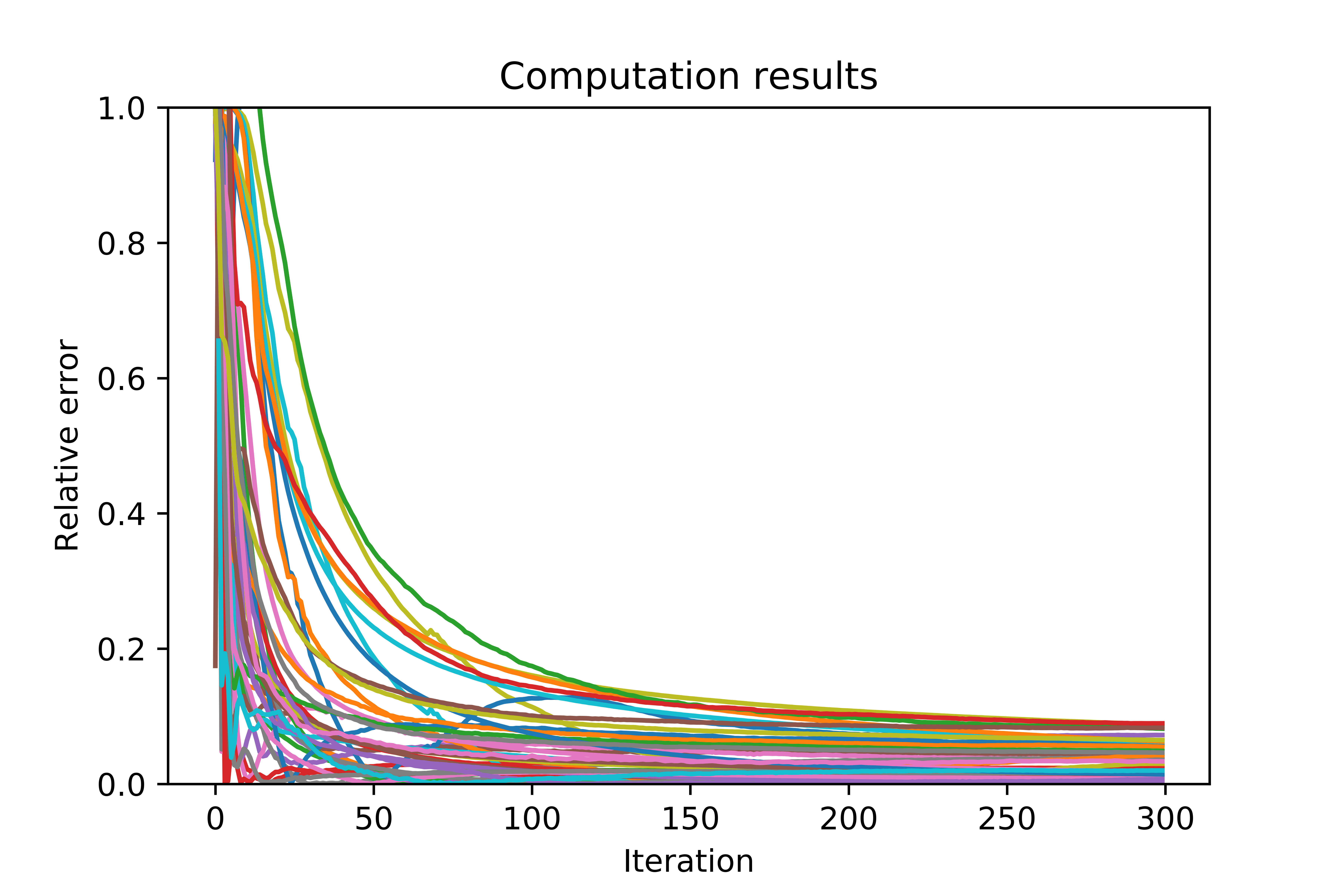}
\caption{Relative errors of all value functions in 10 experiments}
\label{fig}
\end{figure}








\bibliographystyle{IEEEtranS} 
\bibliography{refs}
\vspace{12pt}

\onecolumn

\appendices

\section{Proof of Theorem \ref{thm:main}}

We fix $t_{\text{max}}>1$ and $\pi:\bX\to\cP(\bA)$ for the remainder of this section. Firstly, we will introduce the contraction property of $S$.
\begin{lemma}\label{lem:Contraction}
For any $v,v':\bX\to\bR$, $\|Sv-Sv'\|_\infty\le \gamma\|v-v'\|_\infty$.
\end{lemma}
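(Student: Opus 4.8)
The plan is to peel off the nested operations in the definition \eqref{eq:DefS} of $S$ from the outside in, checking that each one is non-expansive in the relevant sup-norm, so that the only contraction factor comes from the discount $\gamma$ sitting in front of $v$ inside the positive part. Fix a state $i\in\bX$; it suffices to bound $|Sv(i)-Sv'(i)|$ by $\gamma\|v-v'\|_\infty$ uniformly in $i$. Both $Sv(i)$ and $Sv'(i)$ have the form $\inf_{\lambda\in\cP(\bA)}\sup_{\mu\in\cM}(\cdots)$ over common index sets, and for functions $a,b$ on a shared index set one has $|\inf_x a(x)-\inf_x b(x)|\le\sup_x|a(x)-b(x)|$, and likewise for $\sup$. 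Hence it is enough to control the inner integrand pointwise in $(\lambda,\mu)$; and since the $\mu$-integral against a probability measure on $(0,1]$ merely averages, it finally suffices to bound, uniformly in $\xi$, the difference of the inner infima over $q$.

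The heart of the matter is the innermost expression. For fixed $i,\lambda,\mu,\xi$ set
\[
F_v(q):=q+\xi^{-1}\sum_{k\in\bA}\lambda_k\sum_{j\in\bX}T^k_{ij}\big(C(i,k,j)+\gamma v(j)-q\big)_+,
\]
so the inner object is $\inf_q F_v(q)$, which is exactly the conditional $\avar_\xi$ noted after \eqref{eq:CondRM}. The naive estimate $|F_v(q)-F_{v'}(q)|\le\xi^{-1}\gamma\|v-v'\|_\infty$ is useless because $\xi^{-1}$ blows up as $\xi\to 0$; \emph{this is the main obstacle}. The resolution is to absorb the perturbation by translating $q$, which is precisely the translation equivariance (property (ii)) underlying $\avar_\xi$. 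Concretely, with $\delta:=\gamma\|v-v'\|_\infty$ one has $C(i,k,j)+\gamma v(j)\le C(i,k,j)+\gamma v'(j)+\delta$, and since $(\cdot)_+$ is non-decreasing (property (i)), $\big(C(i,k,j)+\gamma v(j)-q\big)_+\le\big(C(i,k,j)+\gamma v'(j)-(q-\delta)\big)_+$. Substituting termwise and writing $q'=q-\delta$ gives $F_v(q)\le F_{v'}(q')+\delta$, whence $\inf_q F_v\le\inf_{q'}F_{v'}+\delta$; the symmetric argument yields the reverse inequality. Thus $|\inf_q F_v-\inf_q F_{v'}|\le\gamma\|v-v'\|_\infty$ uniformly in $\xi,\lambda,\mu$, and crucially the offending $\xi^{-1}$ has disappeared.

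Finally I would reassemble the pieces: integrating the last bound against $\mu$ preserves it (as $\mu$ is a probability measure), and taking $\sup_{\mu\in\cM}$ and then $\inf_{\lambda\in\cP(\bA)}$ preserves it by the non-expansiveness of $\sup$ and $\inf$ recorded in the first paragraph. This yields $|Sv(i)-Sv'(i)|\le\gamma\|v-v'\|_\infty$, and since $i$ was arbitrary, taking the supremum over $i\in\bX$ gives $\|Sv-Sv'\|_\infty\le\gamma\|v-v'\|_\infty$. An equivalent and slightly shorter route would be to quote directly that each conditional $\avar_\xi$ is $1$-Lipschitz in the sup-norm of its argument (an immediate consequence of monotonicity and translation equivariance) and skip the explicit $q$-shift; either way, the only genuinely delicate point is the $\xi^{-1}$ blow-up, which the translation in $q$ neutralizes.
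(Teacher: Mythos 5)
Your proof is correct. Every step checks out: the non-expansiveness of $\inf$ and $\sup$ over a common index set, the fact that integration against a probability measure $\mu\in\cM$ preserves a uniform bound, and — the key point — the translation $q\mapsto q-\delta$ with $\delta=\gamma\|v-v'\|_\infty$, which combines monotonicity of $(\cdot)_+$ with the additive structure of the inner objective to cancel the potentially explosive factor $\xi^{-1}$. You are right to flag that the naive pointwise estimate fails precisely because of $\xi^{-1}$, and your fix is the standard (and correct) one.

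However, your route is genuinely different from the paper's: the paper does not prove the lemma at all, but dispatches it in one line as an immediate consequence of Lemma 3.3 in \cite{Cheng2022Markov}, where the contraction property is established for the general Kusuoka-type risk-averse Bellman operator. What the citation buys is brevity and generality — the external lemma covers the broader setting (general conditional risk mappings, history-dependent policies) from which the present finite-state finite-action case is a specialization. What your argument buys is self-containedness and transparency: it isolates exactly which structural properties of the operator $S$ in \eqref{eq:DefS} drive the contraction, namely that the perturbation $\gamma(v-v')$ enters only inside the positive part, where monotonicity and translation equivariance (your $q$-shift is precisely the cash-invariance of $\avar_\xi$ made concrete) reduce everything to the single discount factor $\gamma$. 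Your closing remark — that one could instead quote the $1$-Lipschitz property of $\avar_\xi$ in the sup-norm — is essentially the abstraction that the cited lemma in \cite{Cheng2022Markov} packages; so your explicit argument can be viewed as unwinding the proof that the paper outsources.
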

\begin{proof}
This is an immediate consequence of \cite[Lemma 3.3]{Cheng2022Markov}.
\end{proof}

The proof of Theorem \ref{thm:main} is also dependent on the following two technical lemmas.
\begin{lemma}\label{lem:EstP}
For any $(i,j,k)\in\bX\times\bA\times\bX$, $\varepsilon\in(0,1)$ and integer $N<\varepsilon_e\lfloor\frac{t_{\max}-1}{\ell}\rfloor$, we have 
\begin{align*}
\bP\bigg(\bigg|\frac{\sum_{t=1}^{t_{\text{max}}-1}\1_{(i,k,j)}(x_t,a_t,x_{t+1})}{\sum_{t=1}^{t_{\max}-1}\1_{(i,k)}(x_t,a_t)}- T^{k}_{ij}\bigg| > \varepsilon\bigg) \le \exp\bigg( - \frac{(N-\varepsilon_e \lfloor\frac{t_{\max}-1}{\ell}\rfloor)^2} {\lfloor \frac{t_{\max}-1}{\ell}\rfloor} \bigg) + 2\exp\left(-\frac{\varepsilon^2N^2}{2t_{\max}}\right).
\end{align*}
\end{lemma}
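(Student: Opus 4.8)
The plan is to reduce the claim to two concentration estimates—one controlling \emph{how often} the pair $(i,k)$ is visited, and one controlling the \emph{transition frequencies} out of $(i,k)$ given enough visits—each handled by an Azuma--Hoeffding inequality applied to a suitable martingale. Write $M := \sum_{t=1}^{t_{\max}-1}\1_{(i,k)}(X^\pi_t,A^\pi_t)$ for the denominator and $B := \lfloor\frac{t_{\max}-1}{\ell}\rfloor$, and decompose
\begin{align*}
\bP\big(|\hat T^k_{ij}-T^k_{ij}|>\varepsilon\big) \le \bP(M<N) + \bP\big(|\hat T^k_{ij}-T^k_{ij}|>\varepsilon,\ M\ge N\big).
\end{align*}
The two terms produce the first and second exponentials respectively; the degenerate event $\{M=0\}$ is harmlessly absorbed into $\{M<N\}$ since $N\ge 1$.

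For the first term I would exploit the parallel-sampling Assumption \ref{assump}(iii). Partition $\{1,\dots,B\ell\}$ into $B$ consecutive blocks of length $\ell$ and set $Y_b := \1\big(\text{$(i,k)$ is visited during block }b\big)$, so that $M\ge\sum_{b=1}^B Y_b$. Taking $t=(b-1)\ell$ in (iii) gives $p_b := \bE[Y_b\mid\cF^\pi_{(b-1)\ell}]>\varepsilon_e$, so $D_b := \sum_{c=1}^b (Y_c-p_c)$ is a martingale for $(\cF^\pi_{b\ell})_b$ whose increments lie in a predictable interval of length $1$. Since $\sum_b p_b>\varepsilon_e B$, the event $\{M<N\}$ is contained in $\{-D_B>\varepsilon_e B-N\}$, and the sharp range-based Azuma--Hoeffding inequality yields $\bP(M<N)\le\exp\!\big(-2(\varepsilon_e B-N)^2/B\big)\le\exp\!\big(-(N-\varepsilon_e B)^2/B\big)$, the first term. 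Here the hypothesis $N<\varepsilon_e B$ is exactly what makes the deviation $\varepsilon_e B-N$ positive.

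The second term is the crux, and the main obstacle is that $M$ is random and statistically coupled to the very transitions we wish to analyze: conditioning on $\{M=m\}$ does \emph{not} make the successor states i.i.d., because $M$ depends on the whole future trajectory. I would avoid conditioning on $M$ altogether by introducing the time-indexed martingale
\begin{align*}
G_t := \sum_{s=1}^{t}\1_{(i,k)}(X^\pi_s,A^\pi_s)\big(\1_{j}(X^\pi_{s+1})-T^k_{ij}\big),
\end{align*}
which is a martingale for $(\cF^\pi_s)_s$ since $\bE[\1_j(X^\pi_{s+1})-T^k_{ij}\mid\cF^\pi_s]=0$ on $\{(X^\pi_s,A^\pi_s)=(i,k)\}$ by the Markov property and vanishes off that set; its increments are bounded by $1$. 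By construction $\hat T^k_{ij}-T^k_{ij}=G_{t_{\max}-1}/M$, so on $\{M\ge N\}$ the event $|G_{t_{\max}-1}/M|>\varepsilon$ forces $|G_{t_{\max}-1}|>\varepsilon M\ge\varepsilon N$; this set inclusion is the key step, as it eliminates the random denominator. Applying Azuma--Hoeffding to $G$ over its $t_{\max}-1$ increments with threshold $\varepsilon N$ gives $\bP(|G_{t_{\max}-1}|>\varepsilon N)\le 2\exp\!\big(-\varepsilon^2N^2/(2(t_{\max}-1))\big)\le 2\exp\!\big(-\varepsilon^2N^2/(2t_{\max})\big)$, matching the second term. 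Summing the two bounds completes the proof; the only care needed is the sharper (bounded-range) Azuma constant for the first term and the bookkeeping of the block-versus-time indexing.
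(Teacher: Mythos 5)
Your proof is correct, and its skeleton is the same as the paper's: the same decomposition into the under-visitation event $\{M<N\}$ and the deviation event, the same key inclusion that eliminates the random denominator (on $\{M\ge N\}$, $|\hat T^k_{ij}-T^k_{ij}|>\varepsilon$ forces $|G_{t_{\max}-1}|>\varepsilon N$), and literally the same second martingale --- your $G_t$ coincides with the paper's $M^{ikj}_{t+1}$ --- finished by the two-sided Azuma bound $2\exp(-\varepsilon^2N^2/(2t_{\max}))$. Where you genuinely differ is the treatment of $\bP(M<N)$. The paper forms the submartingale $L_\iota=\sum_{t=1}^{\ell\iota}\1_{(i,k)}(X^\pi_t,A^\pi_t)-\varepsilon_e\iota$, whose increments range over an interval of length $\ell$; the range-form Azuma inequality applied to it yields $\exp\big(-2(\varepsilon_e B-N)^2/(B\ell^2)\big)$ with $B=\lfloor\frac{t_{\max}-1}{\ell}\rfloor$, which matches the constant stated in the lemma only when $\ell=1$ and is strictly weaker for $\ell>1$, so the paper's first exponential is obtained somewhat loosely. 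Your version --- block indicators $Y_b\in\{0,1\}$ with predictable compensators $p_b>\varepsilon_e$, so that $M\ge\sum_b Y_b$ and the martingale $D_b$ has increments confined to a predictable interval of length $1$ --- gives $\exp(-2(\varepsilon_e B-N)^2/B)\le\exp(-(N-\varepsilon_e B)^2/B)$ uniformly in $\ell$. On this term your argument is therefore tighter and actually justifies the lemma's constant, which the paper's own submartingale does not for $\ell>1$; your explicit absorption of the degenerate event $\{M=0\}$ into $\{M<N\}$ is likewise a detail the paper leaves implicit.
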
 
\begin{proof}
To start with note that
\begin{align*}
&\bigg\{\bigg|\frac{\sum_{t=1}^{t_{\text{max}}-1}\1_{(i,k,j)}(X_t,X_t,X_{t+1})}{\sum_{t=1}^{t_{\max}-1}\1_{(i,k)}(X_t,X_t)}- T^{k}_{ij}\bigg| \ge \varepsilon\bigg\}\\
&\quad\subseteq \bigg\{\sum_{r=1}^{t_{\max}-1}\1_{(i,k)}(X^\pi_r,A^\pi_r) < N \bigg\} \cup \bigg( \bigg\{\sum_{r=1}^{t_{\max}-1}\1_{(i,k)}(X^\pi_r,A^\pi_r) \ge N \bigg\}  \cap \bigg\{\bigg|\frac{\sum_{t=1}^{t_{\text{max}}-1}\1_{(i,k,j)}(X_t,X_t,X_{t+1})}{\sum_{t=1}^{t_{\max}-1}\1_{(i,k)}(X_t,X_t)}- T^{k}_{ij}\bigg| \ge \varepsilon\bigg\} \bigg)\\
&\quad\subseteq \bigg\{\sum_{r=1}^{t_{\max}-1}\1_{(i,k)}(X^\pi_r,A^\pi_r) < N \bigg\} \cup \bigg\{\bigg|\sum_{t=1}^{t_{\text{max}}-1}\1_{(i,k,j)}(X_t,X_t,X_{t+1})- T^{k}_{ij}\sum_{t=1}^{t_{\max}-1}\1_{(i,k)}(X_t,X_t)\bigg| \ge \varepsilon N\bigg\}.
\end{align*}
Therefore,
\begin{align}\label{eq:EstP}
&\bP\bigg(\bigg|\frac{\sum_{t=1}^{t_{\text{max}}-1}\1_{(i,k,j)}(x_t,a_t,x_{t+1})}{\sum_{t=1}^{t_{\max}-1}\1_{(i,k)}(x_t,a_t)}- T^{k}_{ij}\bigg| > \varepsilon\bigg)\nonumber\\
&\quad\le \bP\bigg(\sum_{r=1}^{t_{\max}-1}\1_{(i,k)}(X^\pi_r,A^\pi_r) < N \bigg) + \bP\bigg( \bigg|\sum_{t=1}^{t_{\text{max}}-1}\1_{(i,k,j)}(X_t,X_t,X_{t+1})- T^{k}_{ij}\sum_{t=1}^{t_{\max}-1}\1_{(i,k)}(X_t,X_t)\bigg| \ge \varepsilon N \bigg).
\end{align}
In order to investigate the first term in right hand side of \eqref{eq:EstP}, we introduce an auxiliary process. For $\iota=1,\dots,\lfloor \frac{t_{\max}-1}{\ell} \rfloor$, we let
\begin{align*}
L_\iota := \sum_{t=1}^{\ell \iota}\1_{(i,k)}(X_t,A_t) - \varepsilon_e\iota.
\end{align*}
Note that $(L_\iota)_{\iota=1}^{\lfloor \frac{t_{\max}-1}{\ell} \rfloor}$ is a sub-martingale under the filtration $(\sF^\pi_{\ell\iota})_{r=1}^{\lfloor \frac{t_{\max}-1}{\ell} \rfloor}$. Indeed, by the Markov property of $\set{(X^\pi_{t},A^\pi_{t})}_{t\in\bN}$, we have
\begin{align*}
\bE\big(L_{\iota+1} \big| \sF^\pi_{\ell\iota} \big) = L_\iota + \bE\bigg( \sum_{t=\ell \iota+1}^{\ell (\iota+1)}\1_{(i,k)}(X_t,A_t) - \varepsilon_e \bigg| \sF^\pi_{\ell\iota} \bigg)\ge L_\iota,
\end{align*}
where we have used Assumption \ref{assump} (iii) in the last equality. Then, by Azuma's inequality, for $N<\varepsilon_e\lfloor \frac{t_{\max}-1}{\ell} \rfloor$,
\begin{align}\label{eq:EstP1}
\bP\bigg(\sum_{r=1}^{t_{\max}-1}\1_{(i,k)}(X^\pi_r,A^\pi_r) < N \bigg) &\le \bP\bigg( L_{\lfloor \frac{t_{\max}-1}{\ell} \rfloor} \le N-\varepsilon_e\bigg\lfloor \frac{t_{\max}-1}{\ell} \bigg\rfloor\bigg) \nonumber\\
&\le \exp\bigg( - \frac{(N-\varepsilon_e \lfloor\frac{t_{\max}-1}{\ell}\rfloor)^2} {\lfloor \frac{t_{\max}-1}{\ell}\rfloor} \bigg).
\end{align}
Regarding the second term in \eqref{eq:EstP}, we define $M^{ikj}_1:=0$ and 
\begin{align*}
M^{ikj}_t:=\sum_{r=1}^{t-1} \1_{(i,k,j)}(X^\pi_{r},A^\pi_{r},X^{\pi}_{r+1}) - T^{k}_{ij}\sum_{r=1}^{t-1}\1_{(i,k)}(X^\pi_{r},A^\pi_{r}),\quad t\ge 2.
\end{align*}
Note that $(M^{ikj}_t)_{t\in\bN}$ is a $(\cF^\pi_t)_{t\in\bN}$-martingale:
\begin{align*}
\bE\big(M^{ikj}_{t+1}\big|\cF^\pi_t\big) &= M^{ikj}_{t} + \bE\bigg(\1_{(i,k,j)}(X^\pi_{t},A^\pi_{t},X^{\pi}_{t+1}) - T^{k}_{ij}\1_{(i,k)}(X^\pi_{t},A^\pi_{t}) \bigg|\cF^\pi_t\bigg)\\
&= M^{ikj}_{t} + \bE\bigg(\1_{(i,k,j)}(X^\pi_{t},A^\pi_{t},X^{\pi}_{t+1}) - T^{k}_{ij}\1_{(i,k)}(X^\pi_{t},A^\pi_{t}) \bigg|\sigma(X^\pi_{t},A^\pi_{t})\bigg) = M^{ikj}_{t},
\end{align*}
where we have used the Markov property of $\set{(X^\pi_{t},A^\pi_{t})}_{t\in\bN}$ in the second line. It follows from Azuma's inequality that
\begin{align}\label{eq:EstP2}
\bP\big(\big|M^{ijk}_{t_{\max}}\big| \ge \varepsilon N \big) \le \exp\left(-\frac{\varepsilon^2N^2}{2t_{\max}}\right).
\end{align}
Finally, by combining \eqref{eq:EstP}, \eqref{eq:EstP1} and \eqref{eq:EstP2}, we complete the proof.
\end{proof}

\begin{lemma}\label{lem:Estvhatnew}
Let $v^*$ be the fixed point of $S$ defined in \eqref{eq:DefS}. Let $\hat v$ and $\hat v_{\text{new}}$ be introduced as in Assumption \ref{assump} (iv). Then,
\begin{align*}
\|\hat v_{\text{new}}-v^*\|_\infty \le \gamma\|\hat v-v^*\|_\infty + b^{-1}\bigg(\sup_{(i,k,j)\in\bX\times\bA\times\bA}\big|\hat T^{k}_{ij} - T^{k}_{ij}\big|\frac{c_{\max}}{1-\gamma} + \varepsilon_\theta\bigg) + \varepsilon_{v}
\end{align*}
\end{lemma}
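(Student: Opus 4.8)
The plan is to route the comparison through the true Bellman operator $S$ and its fixed point $v^*$. First I would isolate the idealized update that $\hat v_{\text{new}}$ is designed to track,
\[
\tilde v(i) := \inf_{\lambda\in\cP(\bA)}\sup_{\mu\in\cM}\int_0^1 \inf_{q\in[0,\frac{c_{\max}}{1-\gamma}]}\Big\{ q + \xi^{-1}\sum_{k\in\bA}\lambda_k f_{\hat\theta_{\text{new}}}(i,k,q)\Big\}\mu(\dif\xi),
\]
so that \eqref{eq:vApproxUpdate} gives immediately $\|\hat v_{\text{new}} - \tilde v\|_\infty \le \varepsilon_v$. It then remains to bound $\|\tilde v - v^*\|_\infty$ and assemble the pieces by the triangle inequality.

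The workhorse would be a non-expansiveness estimate for the map $h\mapsto\Phi[h]$, where $\Phi[h](i):=\inf_\lambda\sup_\mu\int_0^1\inf_q\{q+\xi^{-1}\sum_k\lambda_k h(i,k,q)\}\mu(\dif\xi)$ acts on functions $h:\bX\times\bA\times[0,\frac{c_{\max}}{1-\gamma}]\to\bR$. I claim that $\|h_1-h_2\|_\infty\le\delta$ implies $\|\Phi[h_1]-\Phi[h_2]\|_\infty\le b^{-1}\delta$. To see this, fix $i,\lambda,\mu$ and $\xi$: inside the braces the two integrands differ by $\xi^{-1}\sum_k\lambda_k(h_1-h_2)(i,k,q)$, of magnitude at most $\xi^{-1}\delta$ since $\sum_k\lambda_k=1$. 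Because $\inf_q$, the integral $\int\cdot\,\mu(\dif\xi)$, $\sup_\mu$ and $\inf_\lambda$ are each $1$-Lipschitz with respect to the sup-norm of their arguments, the discrepancy propagates unchanged; and Assumption~\ref{assump}(ii) forces $\xi\ge b$ on the support of every $\mu\in\cM$, so $\xi^{-1}\le b^{-1}$, and integrating $\xi^{-1}\delta$ against the probability measure $\mu$ yields the bound $b^{-1}\delta$.

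With this in hand I would identify the two relevant instances of $\Phi$. Writing $h_2(i,k,q):=\sum_j T^k_{ij}(C(i,k,j)+\gamma\hat v(j)-q)_+$, the restriction of $q$ to $[0,\frac{c_{\max}}{1-\gamma}]$ noted after \eqref{eq:DefS} (valid since $\hat v$ ranges in that interval) gives $S\hat v=\Phi[h_2]$, while $\tilde v=\Phi[f_{\hat\theta_{\text{new}}}]$ by construction. Inserting the intermediate $h_1(i,k,q):=\sum_j \hat T^k_{ij}(C(i,k,j)+\gamma\hat v(j)-q)_+$ and using the sup-norm triangle inequality, \eqref{eq:thetaApproxUpdate} controls $\|f_{\hat\theta_{\text{new}}}-h_1\|_\infty\le\varepsilon_\theta$, while $\|h_1-h_2\|_\infty$ is controlled by the transition-estimation error: each positive part $(C(i,k,j)+\gamma\hat v(j)-q)_+$ lies in $[0,\frac{c_{\max}}{1-\gamma}]$, so the perturbation $\sum_j(\hat T^k_{ij}-T^k_{ij})(\cdots)_+$ is bounded in terms of $\sup_{i,k,j}|\hat T^k_{ij}-T^k_{ij}|$ scaled by $\frac{c_{\max}}{1-\gamma}$. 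Applying the non-expansiveness estimate then gives $\|\tilde v-S\hat v\|_\infty\le b^{-1}\big(\varepsilon_\theta+\sup_{i,k,j}|\hat T^k_{ij}-T^k_{ij}|\frac{c_{\max}}{1-\gamma}\big)$.

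Finally I would close with the contraction property. Since $v^*=Sv^*$, Lemma~\ref{lem:Contraction} gives $\|S\hat v-v^*\|_\infty=\|S\hat v-Sv^*\|_\infty\le\gamma\|\hat v-v^*\|_\infty$, and combining the three estimates via $\|\hat v_{\text{new}}-v^*\|_\infty\le\|\hat v_{\text{new}}-\tilde v\|_\infty+\|\tilde v-S\hat v\|_\infty+\|S\hat v-v^*\|_\infty$ reproduces the claimed bound. The main obstacle I anticipate is the non-expansiveness lemma: one must verify carefully that the nested $\inf_q$, $\int\,\mu(\dif\xi)$, $\sup_\mu$ and $\inf_\lambda$ do not amplify the sup-norm perturbation beyond the single factor $b^{-1}$, which is exactly where Assumption~\ref{assump}(ii) (the uniform lower bound $b$ on the support of $\cM$) is indispensable; the remainder is triangle-inequality bookkeeping.
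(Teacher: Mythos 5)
Your proof is correct and follows essentially the same route as the paper's: both decompose the error by the triangle inequality through $S\hat v$ and $Sv^*$, insert the intermediate function built from the empirical transitions $\hat T^k_{ij}$, invoke \eqref{eq:thetaApproxUpdate} and \eqref{eq:vApproxUpdate} for the two approximation errors together with $\xi^{-1}\le b^{-1}$ from Assumption~\ref{assump}(ii), and close with the contraction Lemma~\ref{lem:Contraction}. The only difference is presentational: you isolate the $b^{-1}$-Lipschitz (non-expansiveness) property of the nested $\inf_q$, $\int\mu(\dif\xi)$, $\sup_\mu$, $\inf_\lambda$ composition as an explicit standalone claim, whereas the paper applies it implicitly inside its chain of inequalities.
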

\begin{proof}
To start with, by \eqref{eq:vApproxUpdate} and the fact that $v^*=Sv^*$,
\begin{align*}
\|\hat v_{\text{new}}-v^*\|_\infty \le \sup_{i\in\bX}\bigg| \inf_{\lambda\in\cP(\bA)}\sup_{\mu\in\cM} \bigg\{ \int_{(0,1]}    \inf_{q\in[0,\frac{c_{\text{max}}}{1-\gamma}]}\bigg\{  q + \xi^{-1}\sum_{k\in\bA}\lambda_k f_{\hat\theta_{\text{new}}}(i,k,q) \bigg\}\mu(\dif\xi)\bigg\} - Sv^*(i) \bigg| + \varepsilon_v.
\end{align*}
Then, by Assumption \ref{assump} (ii), \eqref{eq:thetaApproxUpdate} and Lemma \ref{lem:Contraction},
\begin{align*}
\|\hat v_{\text{new}}-v^*\|_\infty &\le \sup_{i\in\bX} \bigg| \inf_{\lambda\in\cP(\bA)}\sup_{\mu\in\cM} \bigg\{ \int_{(0,1]}    \inf_{q\in[0,\frac{c_{\text{max}}}{1-\gamma}]}\bigg\{  q + \xi^{-1}\sum_{k\in\bA}\lambda_k\sum_{j\in\bX}\hat T^k_{ij}\big(c_t+\gamma\hat v(x_{t+1})-q\big)_+ \bigg\}\mu(\dif\xi)\bigg\} - Sv^*(i) \bigg| \\
&\quad+ b^{-1}\varepsilon_\theta + \varepsilon_v \\
&\le \sup_{i\in\bX} \bigg| \inf_{\lambda\in\cP(\bA)}\sup_{\mu\in\cM} \bigg\{ \int_{(0,1]}    \inf_{q\in[0,\frac{c_{\text{max}}}{1-\gamma}]}\bigg\{  q + \xi^{-1}\sum_{k\in\bA}\lambda_k\sum_{j\in\bX}\hat T^k_{ij}\big(c_t+\gamma\hat v(x_{t+1})-q\big)_+ \bigg\}\mu(\dif\xi)\bigg\} - S\hat v(i) \bigg| \\
&\quad+ \|S\hat v-Sv^*\|_\infty + b^{-1}\varepsilon_\theta + \varepsilon_v \\
&\le \gamma\|\hat v-v^*\|_\infty + b^{-1}\bigg(\sup_{(i,k,j)\in\bX\times\bA\times\bA}\big|\hat T^{k}_{ij} - T^{k}_{ij}\big|\frac{c_{\max}}{1-\gamma} + \varepsilon_\theta\bigg) + \varepsilon_{v}.
\end{align*}
The proof is complete.
\end{proof}

We are now in position to prove Theorem \ref{thm:main}.
\begin{proof}[Proof of Theorem \ref{thm:main}]
We first simplify Lemma \ref{lem:EstP} by letting $N= \lceil\frac12 \varepsilon_e \lfloor\frac{t_{\max}-1}{\ell}\rfloor\rceil$
\begin{align*}
&\bP\bigg(\bigg|\frac{\sum_{t=1}^{t_{\text{max}}-1}\1_{(i,k,j)}(x_t,a_t,x_{t+1})}{\sum_{t=1}^{t_{\max}-1}\1_{(i,k)}(x_t,a_t)}- T^{k}_{ij}\bigg| > \varepsilon\bigg) \le \exp\bigg( - \frac{\lfloor\frac12 \varepsilon_e \lfloor\frac{t_{\max}-1}{\ell}\rfloor\rfloor^2}{\lfloor \frac{t_{\max}-1}{\ell}\rfloor} \bigg) + 2\exp\left(-\frac{\varepsilon^2\varepsilon_e^2 \lfloor\frac{t_{\max}-1}{\ell}\rfloor^2}{8t_{\max}}\right)\\
&\quad\le e^{\varepsilon_e}\exp\bigg( - \frac{\varepsilon_e^2}4 \bigg\lfloor\frac{t_{\max}-1}{\ell}\bigg\rfloor \bigg) + 2\exp\left(-\frac{\varepsilon^2\varepsilon_e^2 (\frac{t_{\max}}{\ell}-1)\lfloor\frac{t_{\max}-1}{\ell}\rfloor}{8t_{\max}}\right) \\
&\quad \le 3\exp\bigg( - \frac{\varepsilon_e^2}4 \bigg\lfloor\frac{t_{\max}-1}{\ell}\bigg\rfloor \bigg) + 3\exp\left(-\frac{\varepsilon^2\varepsilon_e^2 }{8\ell}\bigg\lfloor\frac{t_{\max}-1}{\ell}\bigg\rfloor\right),
\end{align*}
where we have used the fact that $\lfloor\frac{t_{\max}-1}{\ell}\rfloor \ge \frac{t_{\max}}{\ell}-1\ge 0$ in the second line. Consequently, 
\begin{align*}
&\bP\bigg(\bigg|\frac{\sum_{t=1}^{t_{\text{max}}-1}\1_{(i,k,j)}(x_t,a_t,x_{t+1})}{\sum_{t=1}^{t_{\max}-1}\1_{(i,k)}(x_t,a_t)}- T^{k}_{ij}\bigg| \le \varepsilon, \, (i,k,j)\in\bX\times\bA\times\bX\bigg)\\
&\quad\ge 1- \sum_{(i,j,k)\in\bX\times\bA\times\bX}\bP\bigg(\bigg|\frac{\sum_{t=1}^{t_{\text{max}}-1}\1_{(i,k,j)}(x_t,a_t,x_{t+1})}{\sum_{t=1}^{t_{\max}-1}\1_{(i,k)}(x_t,a_t)}- T^{k}_{ij}\bigg| > \varepsilon, \, (i,k,j)\in\bX\times\bA\times\bX\bigg) \\
&\quad\ge 1 - 3|\bX|^2 |\bA|\bigg( \exp\bigg( - \frac{\varepsilon_e^2}4 \bigg\lfloor\frac{t_{\max}-1}{\ell}\bigg\rfloor \bigg) + \exp\left(-\frac{\varepsilon^2\varepsilon_e^2 }{8\ell}\bigg\lfloor\frac{t_{\max}-1}{\ell}\bigg\rfloor\right) \bigg)
\end{align*}
Finally, under the realization that $\bigg|\frac{\sum_{t=1}^{t_{\text{max}}-1}\1_{(i,k,j)}(x_t,a_t,x_{t+1})}{\sum_{t=1}^{t_{\max}-1}\1_{(i,k)}(x_t,a_t)}- T^{k}_{ij}\bigg| \le \varepsilon$ for all $(i,k,j)\in\bX\times\bA\times\bX$, invoking \eqref{lem:Estvhatnew} iteratively, we yield 
\begin{align*}
\|\hat v_{n}-v^*\|_\infty \le \gamma^n\|\hat v_0-v^*\| + \frac{b^{-1}}{1-\gamma}\bigg(\frac{c_{\max}}{1-\gamma}\varepsilon+\varepsilon_\theta\bigg) + \frac{\varepsilon_v}{1-\gamma},
\end{align*}
which completes the proof.
\end{proof}

\end{document}